\documentclass[letterpaper, 10 pt, conference]{ieeeconf}  

\IEEEoverridecommandlockouts                              
\usepackage{graphics} 
\usepackage{epsfig} 
\usepackage{mathptmx} 
\usepackage{times} 
\usepackage{amsmath} 
\usepackage{amssymb}  
\usepackage{booktabs}
\usepackage{algorithm}
\usepackage{algpseudocode}
\usepackage{url} 
\usepackage{caption}

\usepackage{amsthm}

\theoremstyle{plain}      
\newtheorem{theorem}{Theorem}
\newtheorem{lemma}{Lemma}

\newtheorem{corollary}{Corollary}
\newtheorem{assumption}{Assumption}

\theoremstyle{definition} 
\newtheorem{definition}{Definition}
\newtheorem{remark}{Remark}

\usepackage{todonotes}

\newcommand{\VaR}{\mathrm{VaR}}
\newcommand{\CVaR}{\mathrm{CVaR}}

\title{\LARGE \bf
Federated Distributional Reinforcement Learning \\ with Distributional Critic Regularization
}

\author{David Millard, Cecilia Alm, Rashid Ali, Pengcheng Shi and Ali Baheri
\thanks{D. Millard and A. Baheri are with the Dept. of Mechanical Engineering,
        Rochester Institute of Technology, Rochester, NY 14623, USA
        {\tt\small djm3622@rit.edu,akbeme@rit.edu}}%
\thanks{C. Alm is with the Dept. of Psychology and School of Information,
        Rochester Institute of Technology, Rochester, NY 14623, USA
        {\tt\small coagla@rit.edu}}%
\thanks{R. Ali is with the Department of Engineering Science, University West, Trollhättan 461 30, Sweden,
        {\tt\small rashid.ali@hv.se}}%
\thanks{P. Shi is with the Golisano College of Computing and Information Sciences,
        Rochester Institute of Technology, Rochester, NY 14623, USA
        {\tt\small spcast@rit.edu}}%
}
\begin{document}

\maketitle
\thispagestyle{empty}
\pagestyle{empty}

\begin{abstract}
Federated reinforcement learning typically aggregates value functions or policies by parameter averaging, which emphasizes expected return and can obscure statistical multimodality and tail behavior that matter in safety-critical settings. We formalize federated distributional reinforcement learning (FedDistRL), where clients parametrize quantile value function critics and federate these networks only. We also propose TR-FedDistRL, which builds a per client, risk-aware Wasserstein barycenter over a temporal buffer. This local barycenter provides a reference region to constrain the parameter averaged critic, ensuring necessary distributional information is not averaged out during the federation process. The distributional trust region is implemented as a shrink–squash step around this reference. Under fixed-policy evaluation, the feasibility map is nonexpansive and the update is contractive in a probe-set Wasserstein metric under evaluation. Experiments on a bandit, multi-agent gridworld, and continuous highway environment show reduced mean-smearing, improved safety proxies (catastrophe/accident rate), and lower critic/policy drift versus mean-oriented and non-federated baselines.
\end{abstract}


\section{INTRODUCTION}
As deep reinforcement learning (RL) controllers are deployed in an increasing range of systems, the need for sample-efficient learning from real-world interaction grows accordingly. In many such application domains (e.g., autonomous driving and personal robotics), data is privacy-sensitive and cannot leave the local device, motivating federated reinforcement learning (FRL), where clients learn from local experience while sharing only value-network/policy parameters for federated aggregation.

However, comparatively little work in FRL directly targets tail-risk objectives, instead emphasizing expected-return optimization or constraint-based safety formulations \cite{koursioumpas2024safe,lu2024risk}. Moreover, under environmental heterogeneity, federated parameter averaging can induce output-space smoothing in learned value estimates, effectively collapsing distinct return distributions toward an ``average'' behavior that may be mismatched and potentially unsafe for individual clients \cite{peng2025neural}. Motivated by these issues, we study an actor-critic federated distributional RL (FedDistRL) setting in which the policy network (responsible for mapping states to actions) is kept local, while the critic (a value function estimator used to evaluate the policy and perform policy improvement) uses a risk-augmented advantage surrogate. While this enables risk-aware value sharing (via critic parameter sharing), aggregation-induced smoothing can still attenuate the multimodality and tail structure that motivates distributional value estimates. To mitigate this effect, we propose a distribution-space trust-region mechanism: each client constructs a local, risk-aware reference via a CVaR-weighted Wasserstein barycenter over a buffer containing recent critic outputs and constrains the next-round critic update to remain close to this reference. This constraint prevents aggregation-induced parameter updates from collapsing locally observed multimodal or heavy-tailed return distributions. Our contributions are (i) we identify mean-smearing in parameter aggregation and demonstrate it empirically, (ii) we introduce FedDistRL, federating distributional critics  while keeping policies local, (iii) we propose TR-FedDistRL, a barycentric regularization method that biases critic updates toward safety via a CVaR-weighted reference distribution, and (iv) we provide a theoretical stability result for the constrained critic update and demonstrate empirical improvements in safety-related metrics.


\section{RELATED WORKS}

\subsection{Federated Reinforcement Learning} 
Federated learning classically aggregates model updates via iterative averaging to learn from decentralized data under communication constraints \cite{mcmahan2017communication}.  Extending this paradigm to reinforcement learning, federated reinforcement learning is commonly categorized into horizontal and vertical settings depending on whether clients share state--action representations \cite{qi2021federated}.  A central technical obstacle is environment heterogeneity (distinct transition kernels and reward processes), for which recent work analyzes aggregation-induced optimality gaps and proposes averaging variants such as QAvg/PAvg \cite{jin2022federated}. Related FRL directions include personalization via distance constraints \cite{xiong2024personalized} and offline/ensemble-driven experience sharing \cite{rengarajan2024federated}. However, the dominant practice in FRL remains mean-oriented: value/policy aggregation is typically designed around expected return, which can obscure multimodality and tail behavior that are critical in safety-sensitive deployments.

\subsection{Risk-Averse Reinforcement Learning} 
Distributional reinforcement learning models the full return law rather than only its expectation \cite{bellemare2017distributional}, with practical function-approximation methods such as quantile regression and implicit quantile networks \cite{dabney2018distributional,dabney2018implicit}.  Risk-sensitive objectives based on lower-tail criteria (e.g., CVaR) have a foundational optimization characterization \cite{rockafellar2000optimization} and have been integrated into RL via CVaR-MDP formulations and policy-gradient style methods \cite{NIPS201564223ccf,tamar2015optimizing}.  Recent work connects CVaR optimization explicitly with distributional RL and highlights subtleties of risk-based action selection under distributional Bellman operators \cite{NEURIPS2022c88a2bd0}.  In safety-critical settings, distributional critics have also been used to enforce risk-aware constraints (e.g., distributional safety critics and trust-region style safe distributional actor-critic methods) \cite{yang2023safety,kim2023trust}.  Separately, optimal transport provides geometry-aware aggregation via Wasserstein barycenters \cite{villani2008optimal,agueh2011barycenters}, which have been used in RL for uncertainty propagation and exploration \cite{metelli2019propagating,likmeta2023wasserstein} and have recently appeared in heterogeneous federated RL (as an alternative parameter averaging mechanism) \cite{pereira2025heterogeneous}. Our work uses a risk-weighted Wasserstein barycenter to define a local trust region that regularizes the federated critic toward the previous round's measure of risk.

\subsection{Trust Regions \& Reinforcement Learning}
Trust-region methods stabilize policy-gradient learning by constraining each update to remain close to the current policy, classically via a KL-divergence bound (TRPO) \cite{schulman2015trust}. PPO approximates this principle with clipped (or penalized) surrogates that provide much of TRPO's empirical stability with simpler optimization \cite{schulman2017proximal}. For safety-constrained learning, CPO extends TRPO-style updates to enforce expected-cost constraints during training \cite{achiam2017constrained}. In multi-agent settings, trust-region guaranties are more delicate due to non-stationarity induced by other agents; recent works derive MARL trust-region variants with sequential updates and monotonic-improvement style results \cite{kuba2021trust}. Furthermore, work has also explored replacing KL with Wasserstein-1 trust regions for multi-agent coordination, achieving success in escaping local optima \cite{salgarkar2025distance}.

\subsection{Optimal Transport \& Reinforcement Learning}
OT has been increasingly used in RL as a geometric tool for comparing and regularizing distributions beyond mean-based objectives \cite{baheri2024synergy}. In risk-sensitive RL, OT has been used to formulate risk-aware objectives through distances between visitation distributions and desired risk profiles \cite{baheri2023risk}. OT has also been applied in inverse reinforcement learning to study reward ambiguity through Wasserstein geometry \cite{baheri2023understanding}. More recently, Wasserstein-based regularization has been incorporated into actor-critic methods to stabilize value estimation \cite{baheri2025wave}, and barycenter-based coordination mechanisms have been proposed for cooperative multi-agent RL \cite{baheri2025wasserstein}. 

\begin{figure*}[ht]
    \centering
    {
        \includegraphics[width=0.32\textwidth]{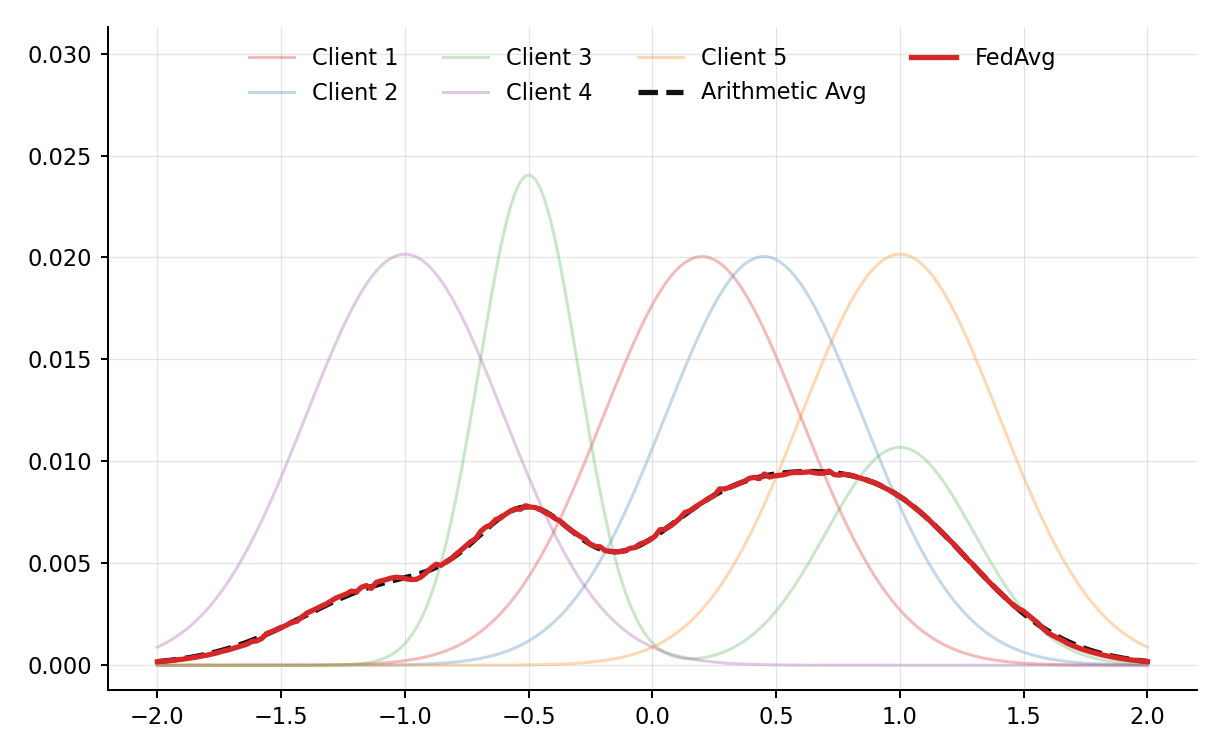}
    }\hfill
    {
        \includegraphics[width=0.32\textwidth]{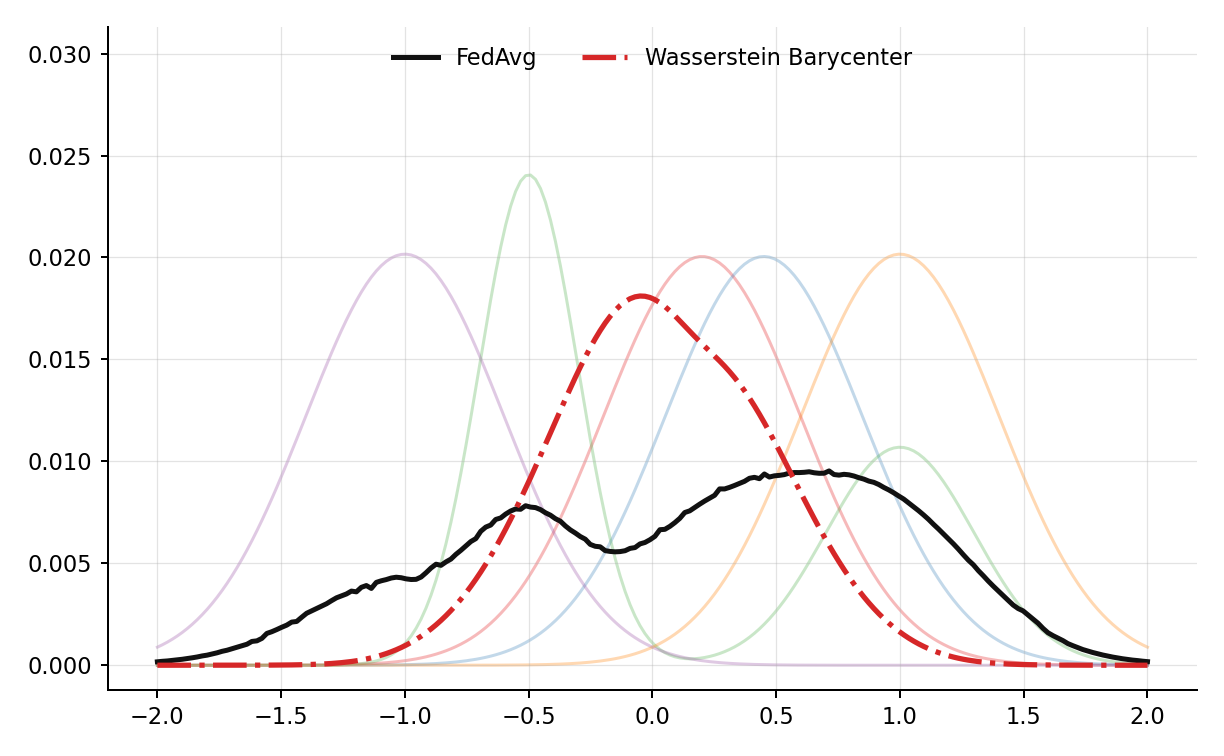}
    }\hfill
   {
        \includegraphics[width=0.32\textwidth]{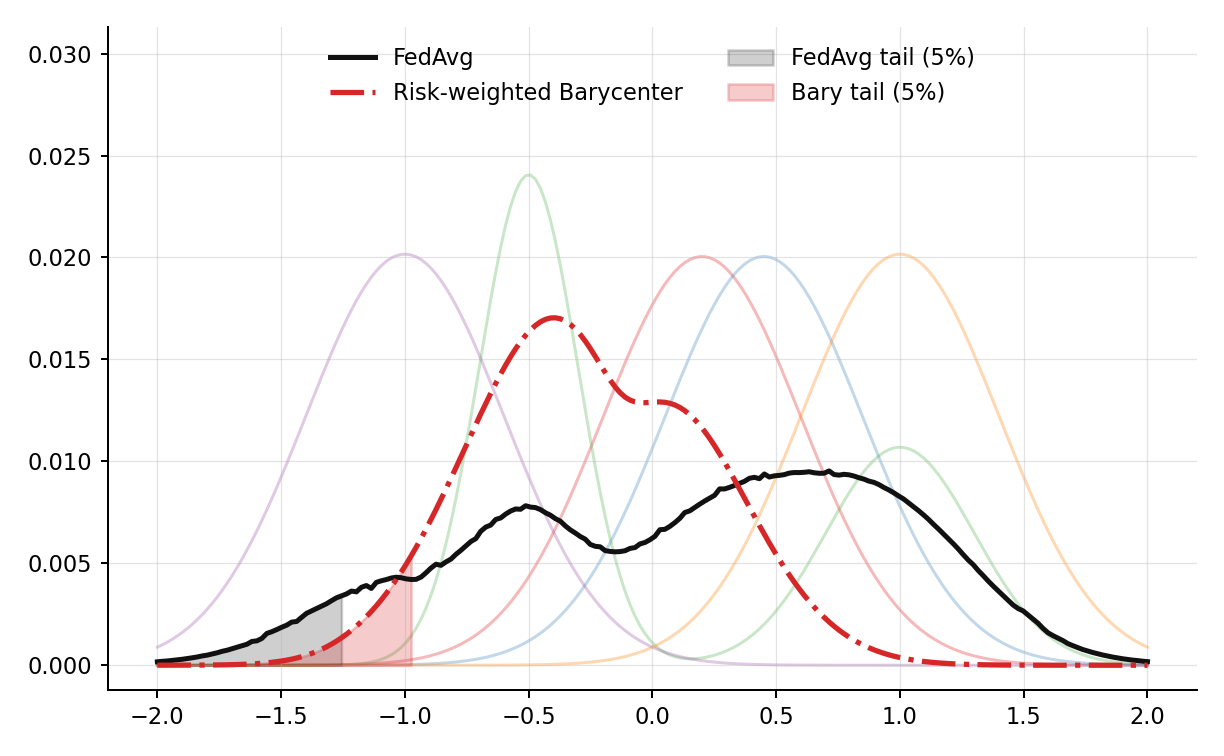}
    }
    \caption{Each panel shows how aggregation operators combine scalar, multimodal client reward distributions in the bandit setting (i.e., $Z^\pi=r_t$, $\gamma=0$). Each client is parameterized with QR-DQN models and averaging is computed via Equation~\ref{eq:param_avg}.
\emph{Left:} critic parameter averaging (\textsc{FedAvg}) induces an output-space arithmetic averaging effect that can smear modes and attenuate tail mass (mean-smearing).
\emph{Middle:} an unweighted Wasserstein-1 barycenter aggregates in distribution space and better preserves geometric structure (modes/tails) compared with arithmetic averaging.
\emph{Right:} a CVaR-risk-weighted Wasserstein-1 barycenter biases the barycenter toward lower-tail behavior by upweighting clients with larger lower-tail risk (CVaR at level $\alpha = 0.1$), helping retain tail-relevant mass while still preserving multimodality.}
    \label{fig:bandit}
\end{figure*}


\section{PRELIMINARIES \& PROBLEM FORMULATION}

\subsection{Distributional Reinforcement Learning}

Distributional reinforcement learning models the return as a random variable rather than its expectation.  
For a policy $\pi$, the return distribution is defined as:
\begin{align}
Z^\pi(s,a)
&\triangleq
\sum_{t=0}^{\infty} \gamma^t r_t(s_t,a_t),
\label{eq:return_def}
\end{align}
where $a_t \sim \pi(\cdot \mid s_t)$ and $s_{t+1} \sim P(\cdot \mid s_t,a_t)$.
The corresponding state--action value function is recovered as
$
Q^\pi(s,a)
=
\mathbb{E}\!\left[ Z^\pi(s,a) \right].
$
Distributional methods retain the full law $Z^\pi(s,a) \in \mathcal{P}(\mathbb{R})$.
The distributional Bellman evaluation operator associated with a fixed policy $\pi$ is
\begin{align}
(T^\pi Z)(s,a)
\overset{d}{=}
R(s,a)
+
\gamma Z(S',A'),
\end{align}
where $A' \sim \pi(\cdot\mid S'),\;
S' \sim P(\cdot\mid s,a)$. For $p \ge 1$, this operator is a $\gamma$-contraction in the $p$-Wasserstein metric and admits a unique fixed point corresponding to $Z^\pi$
\cite{bellemare2017distributional}.
The distributional control operator is defined by
\begin{align}
(T Z)(s,a)
\overset{d}{=}
R(s,a)
+
\gamma Z\!\left(
S',
\arg\max_{a'} \mathbb{E}[Z(S',a')]
\right).
\end{align}
Unlike its expectation-based counterpart, this operator is not a contraction in Wasserstein distance and may fail to admit a unique fixed point
\cite{bellemare2017distributional}. 
\begin{definition}[Empirical-measure lift and induced metric]
\label{def:empirical_metric}
For any $x\in\mathbb R^K$, define the associated empirical measure on $\mathbb R$ by
$\mu_x \triangleq \frac{1}{K}\sum_{k=1}^K \delta_{x_k}.
$For $p\ge 1$, define the empirical Wasserstein metric on $\mathbb R^K$ by
$d_K(x,y) \triangleq W_p(\mu_x,\mu_y).
$ Given a quantile critic $q:S\times A\to\mathbb R^K$, define the induced metric on critics by
$d(q,q') \triangleq \sup_{(s,a)} d_K\!\big(q(s,a),q'(s,a)\big).
$\end{definition}

\subsection{Wasserstein Barycenter}
Optimal Transport (OT) provides a geometric framework for comparing probability
distributions based on mass displacement \cite{villani2008optimal,agueh2011barycenters}. A central object
is the Wasserstein barycenter, defined for distributions $\{\mu_i\}_{i=1}^N$ and weights
$\{\beta_i\}$ as
\begin{equation}
\mu^\star
=
\arg\min_{\mu}
\sum_{i=1}^N
\beta_i\, W_p^p(\mu,\mu_i).
\end{equation}
Unlike Euclidean averaging, Wasserstein barycenters preserve geometric
structure of distributions, including modes and tails.
We consider a CVaR-weighted variant.
Given a random variable $X\sim\mu$, the Conditional Value-at-Risk (CVaR) at level
$\alpha\in(0,1]$ is defined as
\begin{equation}
\CVaR_\alpha(X)
=
\frac{1}{\alpha}
\int_{0}^{\alpha}
\VaR_u(X)\,du,
\end{equation}
where $\VaR_u(X)=\inf\{x:\mathbb{P}(X\le x)\ge u\}$.
Using CVaR-induced weights, we define a risk-weighted Wasserstein barycenter as
$
\mu^\star_{\mathrm{risk}}
=
\arg\min_{\mu\in\mathcal{P}(\mathbb{R})}
\sum_{i=1}^N
\beta_i\, W_p^p(\mu,\mu_i),
\label{eq:risk_weighted_barycenter}
$
where the weights $\beta_i$ are computed from a lower-tail CVaR statistic of each
distribution.
Since $\CVaR_\alpha$ uses this lower tail, distributions with larger tail behavior
($\mathcal{R}_i$) are up-weighted, biasing the barycenter to preserve risk in the event parameter averaging washes it away.

\subsection{Problem Formulation}

We consider $N$ clients $\{F_i\}_{i=1}^N$, each interacting exclusively with a local environment $E_i$. HFRL assumes an aligned state--action structure with heterogeneous dynamics; for all $i \neq j$,
\begin{align}
S_i = S_j = S, \quad A_i = A_j = A, \quad E_i \neq E_j, \quad E_i \in \mathcal{G},
\end{align}
where $S$ and $A$ are shared state and action spaces, and $\mathcal{G}$ is a family of environments. Each $E_i$ is modeled as
$
M_i = \langle S, A, P_i, R_i, \gamma \rangle,
$
with differing transition kernel $P_i$, reward distribution $R_i$, and discount $\gamma \in [0,1)$.
Each client maintains a policy $\pi_i(\cdot|s)$. The distributional return is
$
Z_i^{\pi_i}(s,a)
=
\sum_{t=0}^{\infty} \gamma^t R_i(s_t,a_t),
$
with $a_t \sim \pi_i(\cdot|s_t)$ and $s_{t+1} \sim P_i(\cdot|s_t,a_t)$. Its evolution satisfies
\begin{align}
(T^{\pi_i}Z)(s,a)
\overset{d}{=}
R_i(s,a) + \gamma Z(S',A').
\end{align} 
\begin{definition}[FedDistRL with risk-aware reference]
\label{def:feddistrl_reference}
Now consider a critic-only federated distributional reinforcement learning setting with
$N$ heterogeneous clients. Each client $F_i$ interacts with its local MDP $M_i$ and maintains
(i) a local policy $\pi_i\in\Pi$ and (ii) a parameterized distributional critic
$\hat Z_{\theta_i}:\mathcal S\times\mathcal A\to \mathcal P(\mathbb R)$, with
\begin{align}
\mathcal{Z}_\Theta \triangleq \{\hat Z_\theta:\mathcal S\times\mathcal A\to \mathcal P(\mathbb R)\mid \theta\in\Theta\}.
\end{align}
At round $t$, client $F_i$ collects private experience $\mathcal D_i^t$ and performs local
actor-critic updates, optionally using a risk-augmented advantage surrogate.
At the end of each round, only critic parameters are communicated. The server aggregates critic parameters via
a parameter average (as implemented in standard FL) \cite{mcmahan2017communication},
\begin{align}\label{eq:param_avg}
\theta_{\mathrm{srv}}^{Z}(t)
=
\mathcal{M}\big(\theta_1^{Z}(t),\dots,\theta_N^{Z}(t); w_1,\dots,w_N\big),
\end{align}
$\mathcal{M}:\Theta^N\to\Theta,$
inducing the broadcast critic $\hat Z_{\theta_{\mathrm{srv}}^{Z}(t)}$.
Each client initializes its next-round critic from the broadcast parameters and then optionally applies
a client-local regularization step relative to a risk-aware reference.
Formally, client $F_i$ constructs a risk-aware reference return law
$\bar Z_i^t\in\mathcal P(\mathbb R)^{\mathcal S\times\mathcal A}$ from a temporal buffer
$\{\hat Z_{\theta_i^{Z}(m)}\}_{m=t-B+1}^t$ of its recent critic outputs via a map
\begin{align}
\bar Z_i^t
=
\mathcal{R}\big(\hat Z_{\theta_i^{Z}(t-B+1)},\dots,\hat Z_{\theta_i^{Z}(t)}\big),
\end{align}
$\mathcal{R}:\mathcal{Z}_\Theta^B\to \mathcal P(\mathbb R)^{\mathcal S\times\mathcal A},
$where $\mathcal{R}$ emphasizes downside behavior.
The reference $\bar Z_i^t$ is not communicated to the server; it is used only to define
a client-local trust region that constrains the post-aggregation critic update.
\end{definition}
\begin{remark}
In the formulation outlined in Defintion~\ref{def:feddistrl_reference}, risk only sensitivity enters through (i) the local policy-improvement surrogate
and (ii) the construction of the client-local reference $\bar Z_i^t$, while the server-side
aggregation $\mathcal{M}$ remains mean-oriented. Furthermore, the actor $\pi_\phi$ parameters remain untouched, leading to a smaller chance of federation to destabilize the decision making process.
\end{remark}

\begin{figure*}[ht]
    \centering
    \includegraphics[width=0.8\linewidth]{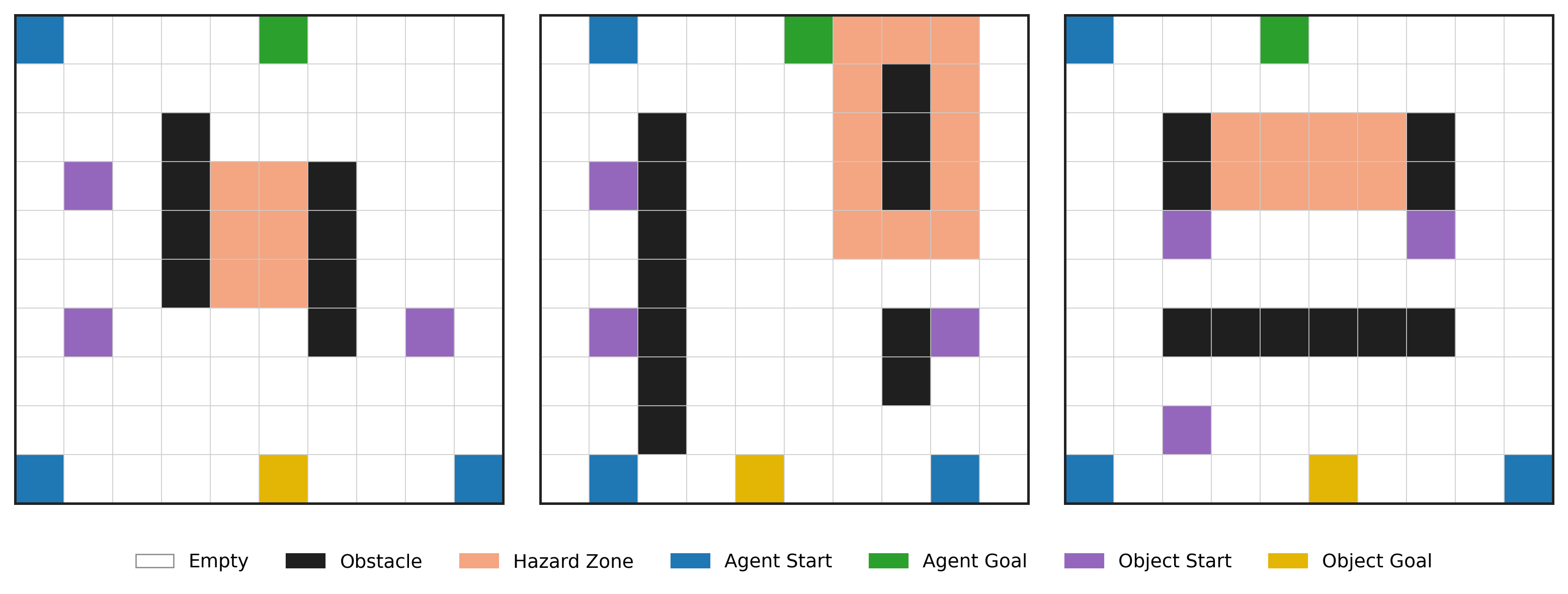}
    \caption{Case Study 2 client environments for Clients 1--3. Each panel shows one client’s environment with obstacles, hazard zone, agent start/goal locations, and object start/goal locations. The three client environments are arranged for direct side-by-side comparison of environmental heterogeneity. All instantiations use a $10\times10$ grid with 3 agents and 3 objects.}
    \label{fig:layouts_2}
\end{figure*}


\section{METHODOLOGY}

\subsection{Local Updates}

This framework works with policy-gradient methods such as PPO and MAPPO, where the critic influences the policy update exclusively through the advantage estimator. Federating only the distributional critic enables the exchange of risk-sensitive value information across clients without directly influencing the policy representations.

Each client optimizes a local clipped policy-gradient objective. For PPO, the actor objective is
\begin{equation}
\mathcal{L}_{\text{actor}}
=
\mathbb{E}_t\!\left[
\min\!\left(
r_t(\theta)\hat{A}^{\text{comb}}_t,\,
\mathrm{clip}\!\left(r_t(\theta),1-\epsilon,1+\epsilon\right)\hat{A}^{\text{comb}}_t
\right)
\right].
\end{equation}
where $r_t(\theta)=\pi_\theta(a_t|s_t)/\pi_{\theta_{\text{old}}}(a_t|s_t)$ is the importance ratio. 
We define a downside-risk-sensitive state--action value by applying lower-tail CVaR
to the return law:
\begin{align}
Q_{\text{mean}}^\pi(s,a) &\triangleq \mathbb{E}\!\left[ Z^\pi(s,a) \right], \\
Q_{\text{cvar}}^\pi(s,a) &\triangleq \CVaR_{\alpha_{\text{cvar}}}\!\left( Z^\pi(s,a) \right).
\end{align}
where CVaR uses the lower tail $(0,\alpha]$ with small $\alpha$. Next, we then optimize a surrogate risk-sensitive objective
\begin{equation}
J_{\lambda}(\pi)
\triangleq
\mathbb{E}_{s_0\sim\rho_0}\!\left[
V_{\text{mean}}^\pi(s_0)
+
\frac{\lambda_{\text{cvar}}}{\tau_{\text{cvar}}}\,V_{\text{cvar}}^\pi(s_0)
\right],
\label{eq:risk_objective}
\end{equation}
$V_{\bullet}^\pi(s)\triangleq \mathbb{E}_{a\sim\pi(\cdot|s)}\!\left[Q_{\bullet}^\pi(s,a)\right]$. If risk-sensitivity is not used, this collapses to the standard policy gradient. With a quantile critic $\hat Z_\theta(s,a)=\{z_{\tau_k}(s,a)\}_{k=1}^K$,
we estimate CVaR by discretizing the tail integral:
\begin{equation}
\widehat{\CVaR}_{\alpha_{\text{cvar}}}\!\big(\hat Z_\theta(s,a)\big)
\triangleq
\sum_{\ell=1}^{L}\alpha_\ell\, z_{u_\ell}(s,a),
\label{eq:cvar_discrete_quantiles}
\end{equation}
$u_\ell\in(0,\alpha_{\text{cvar}}],\ \alpha_\ell\ge 0,\ \sum_{\ell=1}^L \alpha_\ell=1,$ where uniform weights $\alpha_\ell=1/L$ recover the standard CVaR discretization.
In particular, given a
$K$-quantile critic with fixed quantile fractions $\tau_k\triangleq (k-\tfrac12)/K$, we choose
    $u_\ell \in \{\tau_k:\ \tau_k\le \alpha_{\text{cvar}}\}$,
$L \triangleq \max\{1,\lfloor \alpha_{\text{cvar}}K\rfloor\}$,
and set $z_{u_\ell}(s,a)$ to the critic output at level $u_\ell$, i.e.,
$z_{u_\ell}(s,a)=F^{-1}_{\hat Z_\theta(s,a)}(u_\ell)$ as represented by the corresponding predicted
quantile. Uniform weights $\alpha_\ell=1/L$ recover the standard Riemann-sum approximation
$\widehat{\CVaR}_{\alpha_{\text{cvar}}}(\hat Z_\theta(s,a)) \approx \frac{1}{\alpha_{\text{cvar}}}\int_0^{\alpha_{\text{cvar}}}
F^{-1}_{\hat Z_\theta(s,a)}(u)\,du$. Monotonicity is also enforced by the critic parameterization.
The combined advantage estimator is then defined as:
\begin{equation}
\hat{A}^{\text{comb}}_t
=
\hat{A}^{\text{mean}}_t
+
\lambda_{\text{cvar}}
\frac{\hat{A}^{\text{cvar}}_t}{\tau_{\text{cvar}}},
\end{equation}
where $\hat{A}^{\text{mean}}_t$ is a standard advantage estimator using
$Q_{\text{mean}}$ (e.g., GAE computed from $\mathbb{E}[\hat Z]$), and
$\hat{A}^{\text{cvar}}_t$ is an advantage estimator computed analogously using
$Q_{\text{cvar}}$ via \eqref{eq:cvar_discrete_quantiles}.
The critic is trained using distributional quantile regression with a vector-valued
Bellman target
$
Z_{\text{tgt}}
=
r_t + \gamma(1-d_t)\,Z_{\text{next}},
$
where $Z_{\text{next}}$ denotes the target quantiles at the next state.
The critic loss is the pairwise quantile Huber loss \cite{nair2025significant}:
\begin{equation}
\mathcal{L}_{\text{critic}}
=
\frac{1}{K^2}
\sum_{i=1}^{K}\sum_{j=1}^{K}
\rho_{\tau_i}^{\kappa}
\!\left(
Z_{\text{tgt},j} - Z_i(s_t)
\right),
\end{equation}
At the end of a round, only the parameters of $\hat{Z}$ are exchanged during federated aggregation.

\subsection{Aggregation}
At the end of each federated round, all participating clients $F_i \in \mathcal{F}_n$ transmit their current distributional critic parameters to the central server. In this work, aggregation is performed via parameter averaging of the critic networks. Let $\theta_i^Z$ denote the parameters of the distributional critic at client $F_i$. The aggregated critic is:
\begin{equation}
\theta_{\text{avg}}^Z
=
\sum_{i \in \mathcal{F}_n} w_i \, \theta_i^Z,
\end{equation}
where the weights $w_i \ge 0$ satisfy $\sum_{i \in \mathcal{F}_n} w_i = 1$. The weights reflect client-specific factors such as compute budget or number of local update steps performed during the round. This aggregation assumes that all participating critics share an identical architecture, which is standard in federated learning. If this assumption does not hold, server-side distillation \cite{seo202216} can be used instead, under a fixed probe set of states to align heterogeneous critics. Following aggregation, all participating clients initialize their critic parameters to $\theta_{\text{avg}}^Z$ before the start of the next round.

\begin{remark}
Under environmental heterogeneity, the post-aggregation critic may not reflect the downside (tail) behavior relevant to a particular client. In particular, parameter aggregation can smooth or distort tail structure, yielding a critic whose lower-quantile predictions are misaligned with a client's reality. To mitigate this aggregation-induced risk mismatch, we apply a client-local trust-region regularization step that anchors the next-round critic update to a risk-aware reference constructed from the client's own recent critic outputs, thereby limiting drift in worst-case (tail) estimates across rounds.
\end{remark}

\begin{table*}[t]
\caption{Results are mean $\pm$ standard deviation over 30 random seeds. \textsc{Return} is the average episodic return (higher is better). \textsc{Output Drift} quantifies across-round instability of the distributional critic outputs on a fixed evaluation/probe set (lower means the critic’s predicted return laws change less from round to round and are less perturbed by federation). \textsc{Catastrophe} is the evaluation catastrophe rate (fraction of episodes ending in a catastrophic event; lower is better).}
\label{tab:cdc_fedrl_results_std}
\centering
\small
\begin{tabular}{lccc}
\toprule
Method & Return $\uparrow$ & Output Drift $\downarrow$ & Catastrophe $\downarrow$ \\
\midrule
Local
& 483.09 $\pm$ 128.40
& \textit{113.37} $\pm$ 27.94
& 0.0323 $\pm$ 0.0076 \\
FedAvg
& \textbf{554.76} $\pm$ 95.51
& 159.71 $\pm$ 18.56
& 0.0240 $\pm$ 0.0054 \\
FedAvg (CVaR)
& \textit{544.56} $\pm$ 74.24
& 151.51 $\pm$ 17.09
& \textit{0.0203} $\pm$ 0.0037 \\
FedAvg (CVaR + TR)
& 436.21 $\pm$ 54.13
& \textbf{98.07} $\pm$ 11.77
& \textbf{0.0172} $\pm$ 0.0032 \\
\bottomrule
\end{tabular}
\end{table*}

\subsection{Trust Region}

\subsubsection{Initialization}
In addition to standard local updates, each client maintains a buffer of its own
distributional critic outputs across rounds.
For client $F_i$, let $\boldsymbol{\beta}_i^{(n)}(s,a)=\{\beta_{i,m}(s,a)\}_{m=1}^n$
denote CVaR-based weights over the buffered laws.
From this buffer, client $F_i$ computes a local Wasserstein-$1$ barycenter
\begin{equation}
Z^{\mathrm{bc},(n)}_{i}(s,a)
\in
\arg\min_{Z\in\mathcal{P}(\mathbb{R})}
\sum_{m=1}^{n} \beta_{i,m}(s,a)\,
W_1\!\Big(Z,\, Z_{i}^{(m)}(s,a)\Big),
\label{eq:w1_barycenter_buffer}
\end{equation}
with $\beta_{i,m}(s,a)\ge 0$ and $\sum_{m=1}^n \beta_{i,m}(s,a)=1$.
Given quantile outputs $\{\hat q_{i,m,k}(s,a)\}_{k=1}^K$ at levels
$\tau_k=(k-\tfrac12)/K$, this lower-tail estimate at level $\alpha\in(0,1]$ is
$\widehat{\mathrm{CVaR}}_{\alpha}\!\left(Z_i^{(m)}(s,a)\right)
=
\frac{1}{K_\alpha}\sum_{k:\,\tau_k\le\alpha}\hat q_{i,m,k}(s,a)
$, 
$
K_\alpha=\{k:\tau_k\le\alpha\}.
$ Next, define a tail-risk score
$r_{i,m}(s,a)=-\widehat{\mathrm{CVaR}}_{\alpha}\!\left(Z_i^{(m)}(s,a)\right),
$ and convert scores to barycenter weights via
\begin{equation}
\beta_{i,m}(s,a)=
\frac{\exp\!\big(\lambda\, r_{i,m}(s,a)\big)}
{\sum_{\ell=1}^{n}\exp\!\big(\lambda\, r_{i,\ell}(s,a)\big)},
\qquad \lambda>0.
\label{eq:beta_from_cvar}
\end{equation}
Because returns are scalar, $W_1$ admits a quantile function representation:
for any $Z$ with quantile function $F_Z^{-1}$,
$W_1(Z,Z')
=
\int_0^1 \big|F_Z^{-1}(u)-F_{Z'}^{-1}(u)\big|\,du.
$ Thus, for each $u\in(0,1)$, the Wasserstein-$1$ barycenter quantile is a
$\beta_{i,m}(s,a)$-weighted median:
\begin{equation}
F^{-1}_{Z^{\mathrm{bc},(n)}_{i}(s,a)}(u)
\in
\arg\min_{q\in\mathbb{R}}
\sum_{m=1}^{n}\beta_{i,m}(s,a)\,\big|q - F^{-1}_{Z_i^{(m)}(s,a)}(u)\big|.
\label{eq:w1_barycenter_quantile_median}
\end{equation}
This barycenter is computed locally and acts as the reference distribution. This reference serves as a client-side trust-region that constrains federated updates.

\begin{figure}
    \centering
    \includegraphics[width=1.0\linewidth]{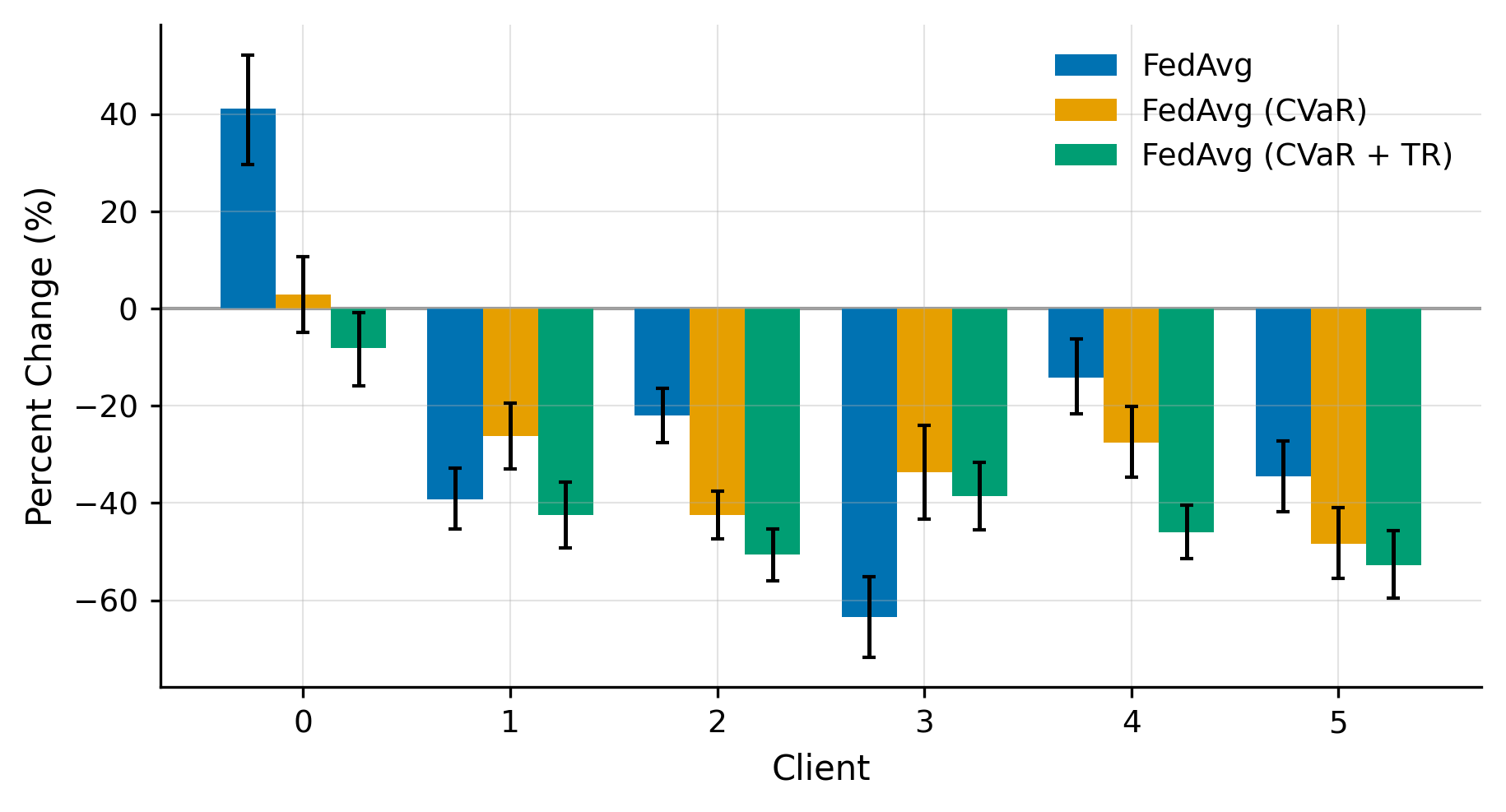}
    \caption{Bars report the percent change in catastrophe rate for each heterogeneous client relative to the \textsc{Local} baseline (baseline corresponds to 0\% by definition), aggregated over 30 random seeds; negative values indicate fewer catastrophes (safer) than \textsc{Local}. Catastrophe rate is the fraction of evaluation episodes that terminate in a designated catastrophic event (environment-defined unsafe terminal condition). Error bars denote variability across seeds (standard deviation).}
    \label{fig:safety_case_2}
\end{figure}

\subsubsection{Proximal Update}
During round $n{+}1$, each client constrains its local critic toward the reference
$\bar Z_i^{(n)}$ computed at the end of round $n$. Let $T_i$ denote the unconstrained
distributional Bellman update.
For each quantile $k$, define an asymmetric tube around $\bar Z_{i,k}^{(n)}$ with radii
$\delta_k^{-}$ (downside) and $\delta_k^{+}$ (upside), and the operator
\begin{align}
\Phi_k(x;\bar Z_{i,k}^{(n)})
=
\bar Z_{i,k}^{(n)}+
\begin{cases}
\delta_k^{+}\tanh\!\left(\beta_k\dfrac{x-\bar Z_{i,k}^{(n)}}{\delta_k^{+}}\right), & x\ge \bar Z_{i,k}^{(n)},\\[6pt]
\delta_k^{-}\tanh\!\left(\beta_k\dfrac{x-\bar Z_{i,k}^{(n)}}{\delta_k^{-}}\right), & x< \bar Z_{i,k}^{(n)}.
\end{cases}
\end{align}
The constrained critic update is implemented via a shrink and squash step:
\begin{align}
\tilde Z_{i,k}
&=
\bar Z_{i,k}^{(n)}
+
\alpha\Big(
T_{i,k}(Z_i)
-
\bar Z_{i,k}^{(n)}
\Big),
\qquad \alpha\in(0,1],\\
Z_{i,k}^{+}
&=
\Phi_k\!\big(\tilde Z_{i,k};\bar Z_{i,k}^{(n)}\big).
\end{align}

\begin{lemma}[Tube squash: anchoring and Lipschitz in $d$]
\label{lem:phi_lip_d}
For each round $n$, assume the tube squash $\Phi^{(n)}(\cdot;\bar q)$ acts coordinatewise and satisfies:
(i) anchoring $\Phi^{(n)}(\bar q;\bar q)=\bar q$; and
(ii) coordinatewise Lipschitz for all $x,y\in\mathbb R^K$ and all $k$,
$
|(\Phi^{(n)}(x;\bar q))_k-(\Phi^{(n)}(y;\bar q))_k|
\le \beta_k^{(n)}|x_k-y_k|.
$
Let $\beta_{\max}^{(n)}\triangleq \max_k \beta_k^{(n)}$. Then for all $x,y\in\mathbb R^K$,
$
W_p\!\big(\Psi_K(\Phi^{(n)}(x;\bar q)),\,\Psi_K(\Phi^{(n)}(y;\bar q))\big)
\le
\beta_{\max}^{(n)}\,
W_p\!\big(\Psi_K(x),\,\Psi_K(y)\big).
$
Consequently, under Assumption~\ref{assump:eval_metric}, for any critics $q,q'$,
$
d\!\big(\Phi^{(n)}(q;\bar q),\Phi^{(n)}(q';\bar q)\big)
\le
\beta_{\max}^{(n)}\, d(q,q').
$
\end{lemma}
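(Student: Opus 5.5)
The plan is to go through the normalized $\ell_p$ distance between quantile vectors. Here I read $\Psi_K(x)$ as the empirical lift $\mu_x=\frac1K\sum_k\delta_{x_k}$ of Definition~\ref{def:empirical_metric}. Set
\[
\|x-y\|_{p,K}\triangleq\Big(\tfrac1K\textstyle\sum_{k=1}^K|x_k-y_k|^p\Big)^{1/p}.
\]
The argument has three steps, in this order.

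\emph{Step 1 (upper bound, no assumptions).} For any $u,v\in\mathbb R^K$, the identity coupling $\frac1K\sum_k\delta_{(u_k,v_k)}$ is an admissible transport plan between $\mu_u$ and $\mu_v$. Hence
\[
W_p(\Psi_K(u),\Psi_K(v))\le\|u-v\|_{p,K}.
\]
Apply this with $u=\Phi^{(n)}(x;\bar q)$ and $v=\Phi^{(n)}(y;\bar q)$. Coordinatewise Lipschitzness (ii) then gives
\[
W_p\big(\Psi_K(\Phi^{(n)}(x;\bar q)),\Psi_K(\Phi^{(n)}(y;\bar q))\big)\le\Big(\tfrac1K\textstyle\sum_k(\beta_k^{(n)})^p|x_k-y_k|^p\Big)^{1/p}\le\beta_{\max}^{(n)}\|x-y\|_{p,K}.
\]
Anchoring (i) is not needed for the Lipschitz estimate. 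It only records that $\bar q$ is a fixed point, which will matter later for contraction around the reference.

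\emph{Step 2 (closing the gap, the main obstacle).} It remains to replace $\|x-y\|_{p,K}$ by $W_p(\Psi_K(x),\Psi_K(y))$. In general only the inequality $W_p\le\|\cdot\|_{p,K}$ holds, because $W_p$ between $K$-atom uniform measures is the minimum over permutations $\sigma$ of $\big(\frac1K\sum_k|x_k-y_{\sigma(k)}|^p\big)^{1/p}$. This is the step I expect to need Assumption~\ref{assump:eval_metric}. I would invoke it in the form that critic outputs on the evaluated state--action pairs are nondecreasing quantile vectors, which the paper says is enforced by the critic parameterization. In one dimension the monotone (comonotone) coupling is optimal. So for sorted $x,y$ the minimizing permutation is the identity, and $W_p(\Psi_K(x),\Psi_K(y))=\|x-y\|_{p,K}$.

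If the assumption is instead phrased as restricting $d$ to sorted representatives, the same equality follows. I would also check whether $\Phi^{(n)}$ preserves sortedness, since the corollary may be iterated. This is not needed for the inequality itself, because Step 1 used only the identity coupling on the image side. Should the assumption be weaker, my fallback is to sort $x,y$ first and apply $\Phi^{(n)}$ to the sorted vectors. That still yields the claimed bound, but only if the coordinate labels of $\bar q$ are taken to be quantile levels, which is how the tube is defined.

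\emph{Step 3 (lift to critics).} Apply Steps 1--2 pointwise with $x=q(s,a)$ and $y=q'(s,a)$, using $\bar q=\bar q(s,a)$. This gives
\[
d_K\big(\Phi^{(n)}(q)(s,a),\Phi^{(n)}(q')(s,a)\big)\le\beta_{\max}^{(n)}\,d_K\big(q(s,a),q'(s,a)\big)\le\beta_{\max}^{(n)}\,d(q,q').
\]
Taking the supremum over $(s,a)$ on the left yields the claim for $d$.
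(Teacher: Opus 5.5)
Your argument is correct once $x$ and $y$ are nondecreasing, and it is the monotone-coupling reasoning the paper itself relies on. The paper gives no separate proof of this lemma. In the proof of Theorem~\ref{thm:anchor_tracking_d}, however, it invokes the ``standard quantile ordering condition'' to write $W_1(\Psi_K(u),\Psi_K(v))=\frac1K\sum_k|u_k-v_k|$, which is your Step~2.

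Your Step~1 buys something that a two-sided use of that identity would not. You bound the image side only through the identity coupling, so you never need $\Phi^{(n)}(x;\bar q)$ to be sorted. It need not be, because $k$-dependent $\bar q_k,\delta_k^{\pm},\beta_k$ can reverse order. For example, $\bar q=(0,0.1)$, $\delta_1^{\pm}=5$, $\delta_2^{\pm}=0.1$, $\beta_k=1$, $x=(10,10)$ gives $\Phi^{(n)}(x;\bar q)\approx(4.82,\,0.2)$.

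The point to fix is how you close Step~2. Assumption~\ref{assump:eval_metric} only fixes the probe-set metric and says nothing about monotonicity. In fact no hypothesis of the lemma can supply sortedness, because the first claim is false for unsorted inputs. Take $K=2$, $\bar q=(0,1)$, $\delta_k^{\pm}=\beta_k=1$, $x=(0,1)$, $y=(1,0)$.
\begin{itemize}
\item Since $\Psi_K(x)=\Psi_K(y)$, the right-hand side is $0$.
\item Since $\Phi^{(n)}(x;\bar q)=(0,1)$ and $\Phi^{(n)}(y;\bar q)=(\tanh 1,\,1-\tanh 1)$, the left-hand side is $1-\tanh 1>0$.
\end{itemize}
Placing this example on a single probe pair also breaks the critic-level claim.

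So nondecreasing $x,y$ (respectively nondecreasing $q(s,a),q'(s,a)$ on $\mathcal P$) must be an explicit added hypothesis. This is the paper's standing ``monotone quantiles by construction'' convention, and it costs nothing downstream. In Theorem~\ref{thm:anchor_tracking_d} the lemma is applied to two vectors, both sorted:
\begin{itemize}
\item $\tilde q_i^{(n+1)}$, a convex combination of sorted vectors;
\item $\bar q_i^{(n)}=\Phi^{(n)}(\bar q_i^{(n)};\bar q_i^{(n)})$, a weighted-median barycenter quantile vector with $u$-independent weights, which is sorted for a consistent (e.g.\ lower) median choice.
\end{itemize}
Your fallback of sorting before applying $\Phi^{(n)}$ bounds a different map and should be dropped.
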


\begin{lemma}[Tube-cap bound in $d$]
\label{lem:tube_cap_d}
Let $\delta_k^{(n)}\triangleq \max\{\delta_k^{-,(n)},\delta_k^{+,(n)}\}$ and define
$
\Delta_{\mathrm{tube}}^{(n)} \triangleq
\left(\frac{1}{K}\sum_{k=1}^K (\delta_k^{(n)})^p\right)^{1/p}.
$
By construction of the tube squash, $|(\Phi^{(n)}(x;\bar q))_k-\bar q_k|\le \delta_k^{(n)}$
for all $k$ and all $x\in\mathbb R^K$. Then for any $\bar q\in\mathbb R^K$,
$
W_p\!\big(\Psi_K(\Phi^{(n)}(x;\bar q)),\,\Psi_K(\bar q)\big)\le \Delta_{\mathrm{tube}}^{(n)}.
$
Equivalently, under Assumption~\ref{assump:eval_metric}, for any critic $q$ and reference $\bar q$,
$
d\!\big(\Phi^{(n)}(q;\bar q),\bar q\big)\le \Delta_{\mathrm{tube}}^{(n)}.
$
\end{lemma}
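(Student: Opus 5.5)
To prove Lemma~\ref{lem:tube_cap_d}, I bound $W_p$ by the cost of one explicit coupling. I read $\Psi_K(x)$ as the empirical lift $\mu_x=\frac1K\sum_k\delta_{x_k}$ of Definition~\ref{def:empirical_metric}. Set $y\triangleq\Phi^{(n)}(x;\bar q)$ and take the index-matching coupling
\[
\gamma\triangleq\frac1K\sum_{k=1}^K\delta_{(y_k,\bar q_k)} .
\]
Its marginals are $\mu_y$ and $\mu_{\bar q}$, so $\gamma$ is an admissible transport plan. Since $W_p$ is an infimum over all couplings,
\[
W_p\big(\Psi_K(y),\Psi_K(\bar q)\big)^p\le \frac1K\sum_{k=1}^K |y_k-\bar q_k|^p .
\]

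The key point is that I never need the sorted (monotone) coupling. If $\Phi^{(n)}$ happened to reorder coordinates, the sorted coupling would only give a smaller cost, so the index coupling still yields a valid upper bound. This step is therefore immediate.

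Next I apply the coordinatewise cap $|y_k-\bar q_k|\le\delta_k^{(n)}$, which holds for every $k$ and every $x$. The bound becomes $\frac1K\sum_k(\delta_k^{(n)})^p=(\Delta_{\mathrm{tube}}^{(n)})^p$, and taking $p$-th roots gives the first claim.

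The cap itself should be checked from the explicit form of $\Phi_k$. Since $|\tanh|<1$, the upside branch gives $|\Phi_k(x)-\bar Z_{i,k}|\le\delta_k^{+}$ and the downside branch gives at most $\delta_k^{-}$. Both are at most $\max\{\delta_k^{-},\delta_k^{+}\}=\delta_k^{(n)}$.

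For the critic-level statement under Assumption~\ref{assump:eval_metric}, I apply the vector bound pointwise. At each $(s,a)$ in the set over which the supremum defining $d$ is taken, I use $x=q(s,a)$ and the reference $\bar q(s,a)$. The right-hand side $\Delta_{\mathrm{tube}}^{(n)}$ does not depend on $(s,a)$, because the radii $\delta_k^{\pm,(n)}$ are fixed per round and quantile index. Taking the supremum over $(s,a)$ therefore preserves the bound, giving $d(\Phi^{(n)}(q;\bar q),\bar q)\le\Delta_{\mathrm{tube}}^{(n)}$.

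I do not expect a genuine obstacle. The only points needing care are, first, that the index coupling is admissible without sorting, and second, that the radii are uniform in $(s,a)$. If the radii were allowed to depend on state, I would replace $\Delta_{\mathrm{tube}}^{(n)}$ by its supremum over $(s,a)$.
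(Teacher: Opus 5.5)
Your proof is correct and is the argument the lemma's statement implicitly relies on; the paper gives no separate proof, only the coordinatewise cap. The index-matching coupling gives $W_p^p\le\frac1K\sum_k|y_k-\bar q_k|^p$, the cap bounds this by $(\Delta_{\mathrm{tube}}^{(n)})^p$, and the radii are uniform in $(s,a)$, so the bound survives the max over the probe set. One detail to add: $d$ in Assumption~\ref{assump:eval_metric} is built from $W_1$, so for $p>1$ you should cite $W_1\le W_p$, or run your coupling with $p=1$ and use the power-mean inequality $\frac1K\sum_k\delta_k^{(n)}\le\Delta_{\mathrm{tube}}^{(n)}$.
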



\section{THEORETICAL RESULTS}
Next we provide a stability guarantee for the critic-side trust-region step used in our method.
The result shows that the trust-region update is a perturbed contraction toward a (moving) reference and that the tube parameters enforce an absolute deviation cap from that reference.
We first fix a client $F_i$. Let $q_i^{(n)}(s,a)\in\mathbb{R}^K$ denote the quantile critic at
round $n$, and let $\bar q_i^{(n)}(s,a)$ denote a (possibly data-dependent) reference
at the same round. Define the tracking error to the moving reference
\begin{equation}
e_n \triangleq d\!\big(q_i^{(n)},\bar q_i^{(n)}\big)
= \max_{(s,a)\in\mathcal P} W_1\!\big(\Psi_K(q_i^{(n)})(s,a),\,\Psi_K(\bar q_i^{(n)})(s,a)\big),
\end{equation}
where $\Psi_K(q)(s,a)\triangleq \frac{1}{K}\sum_{k=1}^K \delta_{q_k(s,a)}$ and $\mathcal P$ is a fixed finite probe set.
Since our critic parameterization enforces monotone quantiles by construction, the approximate backup
$\widehat T_{i,K}^{\pi}$ returns a valid quantile vector.
Define the shrink step
$
\tilde q_{i}^{(n+1)}
\triangleq (1-\alpha)\,\bar q_i^{(n)} + \alpha \big(\widehat T_{i,K}^{\pi_{n+1}} q_i^{(n)}\big)$, 
$\alpha\in(0,1]
$,
and the ideal trust-region map
$
q_{i,\mathrm{ideal}}^{(n+1)}
\triangleq \Phi^{(n)}(\tilde q_i^{(n+1)};\bar q_i^{(n)}).
$
The realized iterate can deviate from the ideal map due to finite SGD, target-network lag,
or implementation details. We model this by a per-round discrepancy $\eta_n\ge 0$:
\begin{equation}
d\!\big(q_i^{(n+1)},q_{i,\mathrm{ideal}}^{(n+1)}\big) \le \eta_n.
\label{eq:eta_def}
\end{equation}
We also define (i) a reference residual, (ii) a backup approximation/sampling error,
and (iii) a reference drift:
\begin{align}
\varepsilon_n
&\triangleq
d\!\big(\bar q_i^{(n)},\,T_{i,K}^{\pi_{n+1}}\bar q_i^{(n)}\big),
\label{eq:eps_def}\\
\xi_n
&\triangleq
d\!\big(\widehat T_{i,K}^{\pi_{n+1}} q_i^{(n)},\,T_{i,K}^{\pi_{n+1}} q_i^{(n)}\big),
\label{eq:xi_def}\\
r_n
&\triangleq
d\!\big(\bar q_i^{(n+1)},\bar q_i^{(n)}\big).
\label{eq:rn_def}
\end{align}

\begin{assumption}[Evaluated critic metric]
\label{assump:eval_metric}
Assume either $(\mathcal S,\mathcal A)$ is finite, or there exists a fixed finite probe set
$\mathcal P\subseteq \mathcal S\times\mathcal A$ such that every occurrence of $d(\cdot,\cdot)$ is interpreted as
\begin{align}
d(q,q') \triangleq \max_{(s,a)\in\mathcal P} W_1\!\big(\Psi_K(q)(s,a),\Psi_K(q')(s,a)\big),
\end{align}
where $\Psi_K(q)(s,a)\triangleq \frac{1}{K}\sum_{k=1}^K \delta_{q_k(s,a)}$. 
\end{assumption}

\begin{figure*}[t]
    \centering
    \includegraphics[width=1.0\linewidth]{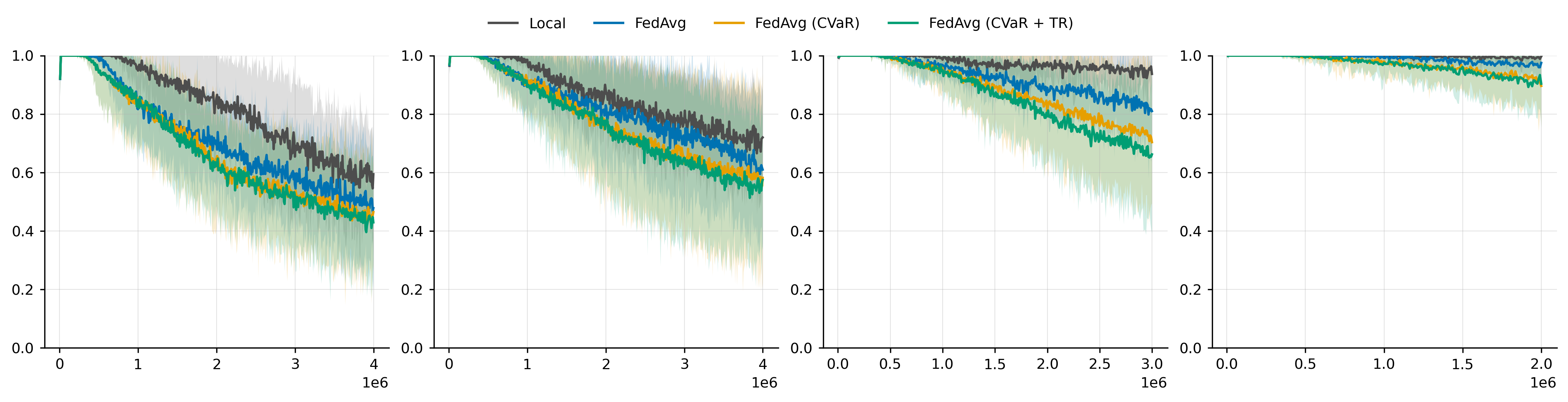}
    \caption{Accident rate across training steps for each heterogeneous client in the highway environment (Case Study 3), comparing \textsc{Local}, \textsc{FedAvg}, \textsc{FedAvg (CVaR)}, and \textsc{FedAvg (CVaR + TR)} over 30 random seeds. Shaded regions denote standard deviation across seeds. Lower accident rate indicates safer behavior. \textsc{FedAvg (CVaR + TR)} consistently achieves the lowest accident rate across clients, demonstrating that the proposed risk-aware trust-region regularization improves safety in a continuous-control setting under environment heterogeneity.}
    \label{fig:full_rates_3}
\end{figure*}

\begin{assumption}[Evaluation contraction in $d$]\label{assump:ideal_contr_d}
For each client $i$ and round $n$, the projected evaluation operator
$
T^{\pi_{n+1}}_{i,K}
\;\triangleq\;
\Pi_{W_1}\circ T^{\pi_{n+1}}_i
$
is a $\gamma$-contraction in the probe-set metric
\begin{align}
d(q,q')
\;\triangleq\;
\max_{(s,a)\in\mathcal P}
W_1\!\big(\Psi_K(q)(s,a),\,\Psi_K(q')(s,a)\big).
\end{align}
This is justified by:
(a) $\gamma$-contraction of $T^{\pi}$ in the maximal $W_1$ metric for policy
evaluation, and
(b) nonexpansiveness of the quantile projection $\Pi_{W_1}$ in $W_1$,
as established in \cite{dabney2018distributional}.
\end{assumption}

\begin{theorem}[Stability around a moving reference]
\label{thm:anchor_tracking_d}
Assume Assumption~\ref{assump:eval_metric}
and Assumption~\ref{assump:ideal_contr_d} holds with parameters
$\gamma\in[0,1)$.
Let $\beta_{\max}^{(n)}$ be the Lipschitz constant from Lemma~\ref{lem:phi_lip_d}.
Then the tracking error $e_n$ satisfies
\begin{equation}
e_{n+1}
\le
\alpha\,\beta_{\max}^{(n)}\big(\gamma e_n + \varepsilon_n + \xi_n \big)
\;+\; r_n \;+\; \eta_n.
\label{eq:main_recursion_d}
\end{equation}
Moreover, the tube squash induces a cap around the previous reference
(Lemma~\ref{lem:tube_cap_d}), yielding
\begin{equation}
e_{n+1}
\le
\min\!\Big\{
\alpha\,\beta_{\max}^{(n)}\big(\gamma e_n + \varepsilon_n + \xi_n \big),
\ \Delta_{\mathrm{tube}}^{(n)}
\Big\}
\;+\; r_n \;+\; \eta_n.
\label{eq:main_recursion_min_d}
\end{equation}
\end{theorem}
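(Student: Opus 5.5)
The plan is a chain of triangle inequalities in the probe-set metric $d$ of Assumption~\ref{assump:eval_metric}, which is a genuine metric: a maximum over $\mathcal P$ of $W_1$ distances between empirical measures. First I move from the moving reference to the previous one and from the realized iterate to the ideal one:
\[
e_{n+1}=d\big(q_i^{(n+1)},\bar q_i^{(n+1)}\big)\le d\big(q_i^{(n+1)},q_{i,\mathrm{ideal}}^{(n+1)}\big)+d\big(q_{i,\mathrm{ideal}}^{(n+1)},\bar q_i^{(n)}\big)+d\big(\bar q_i^{(n)},\bar q_i^{(n+1)}\big).
\]
By \eqref{eq:eta_def} the first term is at most $\eta_n$, and by \eqref{eq:rn_def} the third is $r_n$. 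Everything therefore reduces to bounding $D_n\triangleq d\big(\Phi^{(n)}(\tilde q_i^{(n+1)};\bar q_i^{(n)}),\bar q_i^{(n)}\big)$ in two different ways; taking the smaller of the two gives \eqref{eq:main_recursion_min_d}, and \eqref{eq:main_recursion_d} follows a fortiori.

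\emph{Contraction bound.} Anchoring (i) in Lemma~\ref{lem:phi_lip_d} gives $\bar q_i^{(n)}=\Phi^{(n)}(\bar q_i^{(n)};\bar q_i^{(n)})$, so the Lipschitz part of that lemma yields $D_n\le\beta_{\max}^{(n)}\,d(\tilde q_i^{(n+1)},\bar q_i^{(n)})$. Next I need $d(\tilde q_i^{(n+1)},\bar q_i^{(n)})\le\alpha\,d(\widehat T q_i^{(n)},\bar q_i^{(n)})$, writing $\widehat T=\widehat T^{\pi_{n+1}}_{i,K}$ and $T=T^{\pi_{n+1}}_{i,K}$. I would prove this at each probe pair using the quantile representation of $W_1$ in one dimension. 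Both vectors are monotone, since the critic enforces sorted quantiles and the reference is a weighted-median barycenter of sorted vectors. For sorted vectors, $W_1(\Psi_K(x),\Psi_K(y))=\frac1K\sum_k|x_k-y_k|$. The convex combination $(1-\alpha)\bar q+\alpha\widehat Tq$ is again sorted and differs from $\bar q$ coordinatewise by exactly $\alpha(\widehat Tq-\bar q)$, so the factor $\alpha$ comes out exactly.

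With the factor $\alpha$ in hand, I apply the triangle inequality and Assumption~\ref{assump:ideal_contr_d}:
\[
d(\widehat T q_i^{(n)},\bar q_i^{(n)})\le \xi_n + d(Tq_i^{(n)},T\bar q_i^{(n)}) + \varepsilon_n \le \xi_n+\gamma e_n+\varepsilon_n,
\]
using \eqref{eq:xi_def} and \eqref{eq:eps_def}. Combining, $D_n\le\alpha\beta_{\max}^{(n)}(\gamma e_n+\varepsilon_n+\xi_n)$.

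\emph{Cap bound.} Lemma~\ref{lem:tube_cap_d}, applied to $x=\tilde q_i^{(n+1)}$, gives $D_n\le\Delta_{\mathrm{tube}}^{(n)}$ directly. With $p=1$ this is the $W_1$ version.

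\emph{Main obstacle.} The delicate step is the exact extraction of $\alpha$ and the use of coordinatewise Lipschitz estimates inside a $W_1$ metric on empirical measures. Both need the vectors to stay sorted, so that the coordinate $\ell_1$ distance equals $W_1$ and is not merely an upper bound on it. Two monotonicity facts must therefore be checked. First, $\Phi^{(n)}$ must preserve order, which holds since each $\Phi_k$ is increasing and anchored at the sorted $\bar q_k$; if the $\delta_k^\pm$ vary with $k$ this can fail, and then I would fall back on the coordinatewise bound as an upper estimate and the matching of sorted vectors for the lower side. Second, $\bar q_i^{(n)}$ must itself be sorted, which follows because each weighted median in \eqref{eq:w1_barycenter_quantile_median} is monotone in $u$ when all inputs are.
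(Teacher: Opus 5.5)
Your proposal is correct and follows essentially the same route as the paper's proof. The steps match one for one: the three-term triangle inequality isolating $\eta_n$ and $r_n$, anchoring plus Lemma~\ref{lem:phi_lip_d}, the monotone-coupling extraction of $\alpha$, the add-and-subtract contraction step giving $\xi_n+\gamma e_n+\varepsilon_n$, and Lemma~\ref{lem:tube_cap_d} applied to the same middle term for the cap.

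One small correction to your ``main obstacle'' paragraph: order preservation by $\Phi^{(n)}$ is not needed, and it can fail even for $k$-independent tubes once some $\beta_k>1$. On the output side the identity coupling only has to upper-bound $W_1$, which is precisely your fallback. So all that is used is sortedness of $\widehat T^{\pi_{n+1}}_{i,K}q_i^{(n)}$ and $\bar q_i^{(n)}$, and hence of $\tilde q_i^{(n+1)}$.
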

\begin{proof}[Proof]
By triangle inequality in the metric $d$,
\begin{align}
e_{n+1}
=
d\!\big(q_i^{(n+1)},\bar q_i^{(n+1)}\big)
\le
& \;d\!\big(q_i^{(n+1)},q_{i,\mathrm{ideal}}^{(n+1)}\big)\\
&+
d\!\big(q_{i,\mathrm{ideal}}^{(n+1)},\bar q_i^{(n)}\big)\\
&+
d\!\big(\bar q_i^{(n)},\bar q_i^{(n+1)}\big).
\end{align}
The first and third terms are bounded by $\eta_n$ and $r_n$ via \eqref{eq:eta_def} and
\eqref{eq:rn_def}.
For the middle term, apply Lemma~\ref{lem:phi_lip_d} (anchoring and
$\beta_{\max}^{(n)}$-Lipschitz in $d$ for fixed $\bar q_i^{(n)}$):
\begin{align}
d\!\big(q_{i,\mathrm{ideal}}^{(n+1)},\bar q_i^{(n)}\big)
&=
d\!\Big(\Phi^{(n)}(\tilde q_i^{(n+1)};\bar q_i^{(n)}),\bar q_i^{(n)}\Big)\\
&\le
\beta_{\max}^{(n)}\, d\!\big(\tilde q_i^{(n+1)},\bar q_i^{(n)}\big).
\end{align}
Then for each $(s,a)$,
$\tilde q(s,a) = (1-\alpha)\bar q(s,a) + \alpha \widehat T q(s,a)$.
Letting $\mu_{\tilde q} \triangleq \Psi_K(\tilde q)$ and $\mu_{\bar q}\triangleq \Psi_K(\bar q)$, we use the
one-dimensional monotone coupling. Under the standard quantile ordering condition, the vectors
$\tilde q(s,a)$, $\bar q(s,a)$, and $(\widehat T q)(s,a)$ are nondecreasing in $k$ and represent atoms on a common
quantile grid with equal weights. Hence the optimal transport pairs corresponding indices and
$
W_1\!\big(\mu_{u},\mu_{v}\big)=\frac{1}{K}\sum_{k=1}^K |u_k-v_k|.
$
Applying this with $u=\tilde q(s,a)$ and $v=\bar q(s,a)$ and using
$\tilde q_k(s,a)-\bar q_k(s,a)=\alpha\big((\widehat T q)_k(s,a)-\bar q_k(s,a)\big)$ yields
\begin{align}
W_1\!\big(\mu_{\tilde q(s,a)},\mu_{\bar q(s,a)}\big)
=
\alpha\,W_1\!\big(\mu_{(\widehat T q)(s,a)},\mu_{\bar q(s,a)}\big),
\end{align}
and hence, taking the maximum over $(s,a)\in\mathcal P$,
\begin{equation}
d\!\big(\tilde q_i^{(n+1)},\bar q_i^{(n)}\big)
\le
\alpha\, d\!\big(\widehat T_{i,K}^{\pi_{n+1}} q_i^{(n)},\bar q_i^{(n)}\big).
\end{equation}
Now add and subtract $T_{i,K}^{\pi_{n+1}} q_i^{(n)}$ and $T_{i,K}^{\pi_{n+1}}\bar q_i^{(n)}$,
use triangle inequality, Assumption~\ref{assump:ideal_contr_d}, and
\eqref{eq:eps_def}--\eqref{eq:xi_def}:
\begin{align}
d\!\big(\widehat T_{i,K}^{\pi_{n+1}}  q_i^{(n)},\bar q_i^{(n)}\big)
&\le
d\!\big(\widehat T_{i,K}^{\pi_{n+1}} q_i^{(n)},T_{i,K}^{\pi_{n+1}} q_i^{(n)}\big)\\
&\quad +
d\!\big(T_{i,K}^{\pi_{n+1}} q_i^{(n)},T_{i,K}^{\pi_{n+1}} \bar q_i^{(n)}\big)\\
&\quad +
d\!\big(T_{i,K}^{\pi_{n+1}} \bar q_i^{(n)},\bar q_i^{(n)}\big)\\
&\le
\xi_n + \gamma e_n + \varepsilon_n.
\end{align}
Substituting the bounds yields \eqref{eq:main_recursion_d}.
The capped form \eqref{eq:main_recursion_min_d} follows by Lemma~\ref{lem:tube_cap_d}.
\end{proof}

\begin{table}[t]
\caption{Results are mean $\pm$ standard deviation over 30 random seeds. \textsc{Return} is the average episodic return (higher is better). \textsc{Accident} is the evaluation accident rate (fraction of episodes with at least one collision/accident event; lower is better). Methods mirror Case Study 2.}
\label{tab:case_3}
\centering
\small
\begin{tabular}{lccc}
\toprule
Method & Return $\uparrow$ & Accident $\downarrow$ \\
\midrule
Local
& 23.197 $\pm$ 4.830
& 0.811 $\pm$ 0.063 \\
FedAvg
& 27.797 $\pm$ 4.219
& 0.736 $\pm$ 0.068 \\
FedAvg (CVaR)
& 30.168 $\pm$ 3.996
& 0.678 $\pm$ 0.066 \\
FedAvg (CVaR + TR)
& \textbf{31.231} $\pm$ 3.943
& \textbf{0.653} $\pm$ 0.066 \\
\bottomrule
\end{tabular}
\end{table}

\begin{corollary}[Geometric tracking bound]
\label{cor:geometric_bound_d}
Suppose $\beta_{\max}^{(n)}\le \bar\beta$ for all $n$ and define
$
\rho \triangleq \alpha \bar\beta \gamma$, $
d_n \triangleq \alpha \bar\beta(\varepsilon_n+\xi_n) + r_n + \eta_n.
$
If $\rho<1$ and $d_n\le \bar d$ uniformly, then
$e_n \le \rho^n e_0 + \frac{1-\rho^n}{1-\rho}\,\bar d
\le \rho^n e_0 + \frac{\bar d}{1-\rho}.
$\end{corollary}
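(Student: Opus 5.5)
The plan is to unroll the one-step recursion \eqref{eq:main_recursion_d} of Theorem~\ref{thm:anchor_tracking_d} into a linear scalar recursion with constant coefficients, and then solve it by induction. First, using $\beta_{\max}^{(n)}\le\bar\beta$ and the nonnegativity of $\gamma e_n+\varepsilon_n+\xi_n$ (all are distances or error magnitudes), bound the right-hand side of \eqref{eq:main_recursion_d}:
\begin{align}
e_{n+1}\le \alpha\bar\beta\gamma\, e_n + \alpha\bar\beta(\varepsilon_n+\xi_n)+r_n+\eta_n = \rho\, e_n + d_n \le \rho\, e_n+\bar d .
\end{align}
This step only uses that $\alpha>0$ and that the quantities multiplying $\beta_{\max}^{(n)}$ are nonnegative. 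Nonnegativity holds since $e_n,\varepsilon_n,\xi_n$ are values of the metric $d$.

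Next, I will prove $e_n\le \rho^n e_0+\bar d\sum_{j=0}^{n-1}\rho^j$ by induction on $n$. The case $n=0$ is trivial with an empty sum. For the inductive step, multiply the hypothesis by $\rho\ge 0$ (valid because $\alpha,\bar\beta,\gamma\ge0$) and add $\bar d$. This gives $e_{n+1}\le\rho^{n+1}e_0+\bar d\sum_{j=0}^{n}\rho^j$. Since $\rho<1$, the geometric sum equals $\frac{1-\rho^n}{1-\rho}$, which yields the first inequality. The second inequality follows from $1-\rho^n\le 1$ together with $\bar d\ge 0$ and $1-\rho>0$.

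The only delicate point is a degenerate edge case: if $\rho=0$, the sum formula still holds with the convention $\rho^0=1$. Otherwise the argument is a routine discrete Gr\"onwall-type unrolling. One could optionally add that the capped form \eqref{eq:main_recursion_min_d} gives the complementary bound $e_n\le\Delta_{\mathrm{tube}}^{(n-1)}+r_{n-1}+\eta_{n-1}$ for $n\ge1$, but it is not needed here.
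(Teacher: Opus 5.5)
Your proposal is correct and matches the paper's approach: the paper states the corollary without a separate proof, as an immediate consequence of unrolling \eqref{eq:main_recursion_d} with $\beta_{\max}^{(n)}\le\bar\beta$ to get $e_{n+1}\le\rho e_n+\bar d$ and then summing the geometric series. Your nonnegativity remarks, including $\bar d\ge d_n\ge 0$ which the final inequality needs, cover the only points that require care.
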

\begin{remark}
Theorem~\ref{thm:anchor_tracking_d} is a critic-side stability statement
$d(q,q')=\max_{(s,a)\in\mathcal P} W_1(\Psi_K(q)(s,a),\Psi_K(q')(s,a))$.
Algorithmically, it implies (i) if the defect/perturbation terms are controlled and $\Phi^{(n)}$ remains Lipschitz (bounded $\beta_{\max}^{(n)}$),
then the client critic stays close to its moving reference across rounds (Corollary~\ref{cor:geometric_bound_d}),
preventing large jumps in evaluated return-law predictions after server aggregation, and
(ii) when $\bar q_i^{(n)}$ is constructed to emphasize downside behavior (e.g., a CVaR-weighted barycentric reference over
recent rounds), stability in $d$ limits how quickly aggregation-induced smoothing can erode tail-relevant structure,
up to the tube radius $\Delta_{\mathrm{tube}}^{(n)}$ and drift $r_n$.
In our numerical results, the reported drift-based metrics serve as a proxy for this reference-drift effect, and the reduced drift under the trust-region variant is consistent with the predicted stability around a moving anchor.
\end{remark}


\section{NUMERICAL RESULTS \& DISCUSSION}

\subsection{Case Study 1: Bandits}
We first motivate our mean-smearing claims with Case Study~1. This case study isolates the effect by using a bandit experiment with aggregation in return-law space and parameter space. As shown in Fig.~\ref{fig:bandit}, FedAvg collapses to an arithmetic averaging effect on the induced empirical return measures, introducing multi-modality unseen in the individual client distributions and weakening tail mass. In contrast, an unweighted Wasserstein barycenter better preserves the underlying geometric structure (modality), while the CVaR-weighted barycenter further inflates the aggregate toward larger estimates of downside behavior while retaining the prior structure. These qualitative behaviors directly motivate our choice to construct a client-local, risk-aware reference from recent critic outputs and to regularize updates via a distributional trust region anchored at that reference.

\subsection{Case Study 2: Mutliagent Gridworld}

Next, we evaluate a heterogeneous six-client multi-agent sparse-reward gridworld with small probability hazard zones. Three clients are visualized in Figure~\ref{fig:layouts_2}. We report the metric mean $\pm$ standard deviation over 30 seeds in Table~\ref{tab:cdc_fedrl_results_std} and client-wise safety changes in Fig.~\ref{fig:safety_case_2}. Each client environment is a heterogeneous $10\times 10$ warehouse with $3$ agents and $3$ movable objects, static obstacles, a hazard zone with one active hazard cell sampled per episode, a shared agent goal, and an object goal. Rewards are sparse and event-driven,
$
r_t \triangleq R_{\mathrm{obj}}\mathbf{1}\{\Delta N_{\mathrm{obj},t}>0\}+R_{\mathrm{goal}}\mathbf{1}\{\mathcal{G}_t\},
$
where $\mathcal{G}_t$ denotes the event that all objects have been delivered and all agents are at their goal at time $t$. A catastrophe occurs when any agent enters the active hazard cell and a Bernoulli trigger fires with probability $p_{\text{haz}}$. We report the accident rate as $\mathbb{E}\!\left[\mathbf{1}\{\mathrm{collision}\}\right]$ and output drift on a fixed probe set $\mathcal P$ of randomly-sampled states as $D_{\text{out}}^{(r)}=\frac{1}{|\mathcal P|}\sum_{s\in\mathcal P}\|f_{\phi^{(r)}}(s)-f_{\phi^{(r-1)}}(s)\|_1$. Heterogeneity is introduced client-wise by assigning clients different obstacle layouts, start/goal/object/hazard-zone placements, hazard probability, slip probability, episode horizon, step cost, and catastrophe penalty.
 Relative to \textsc{Local}, federated aggregation improves mean return by increasing sample efficiency through the federation mechanism; however, federation also increases output drift, measured as the temporal distance between a randomly sampled probe set gathered early in training. Incorporating CVaR reduces the catastrophe rate, and adding the trust-region step (\textsc{FedAvg (CVaR + TR)}) yields both the lowest drift and the lowest catastrophe rate in Table~\ref{tab:cdc_fedrl_results_std}. Figure~\ref{fig:safety_case_2} further indicates that these safety gains persist across heterogeneous clients. Overall, the results support the claim that anchoring critic updates to a client-local, risk-aware reference stabilizes return-law estimates across rounds and improves safety metrics under heterogeneity while revealing a negative impact on return under sparse rewards. Furthermore, the result of lower return indicates policy over optimization. Given the small probability that entering a hazard zone actually leads to termination, dangerous policies are incentivized to take the risk for more return.

\subsection{Case Study 3: Highway Environment}

We evaluate a five-client continuous-control highway driving task \cite{highway-env}, with safety measured by the accident rate. Heterogeneity is introduced with differing vehicle counts, vehicle density, target speeds, and reward-speed ranges across clients. We report the average discounted return $J(\pi)\triangleq\mathbb{E}\!\left[\sum_{t\ge 0}\gamma^t r_t\right]$, the accident rate $\mathbb{E}\!\left[\mathbf{1}\{\mathrm{collision}\}\right]$, and a policy-drift proxy given by PPO's minibatch approximate KL divergence $\widehat{\mathrm{KL}}(\pi_{\theta_{n+1}}\|\pi_{\theta_n})$.
Given that this continuous-state setting does not admit the environment-resampling protocol used in Case Study~2, we use $\widehat{\mathrm{KL}}$ as a coarse drift proxy and still observe consistent reductions under \textsc{FedAvg (CVaR + TR)}. Table~\ref{tab:case_3} and Figure~\ref{fig:safety_case_2} report mean $\pm$ standard deviation over 30 seeds. Consistent with Case Study~2, \textsc{FedAvg (CVaR + TR)} dominates the baselines across all evaluation metrics, achieving the highest return while simultaneously reducing policy drift and accident rate. These results indicate that the proposed risk-aware, trust-region-anchored aggregation remains effective in a continuous-control setting, where safety violations correspond to rare but high-impact events, and demonstrate improved robustness without sacrificing performance. Intuitively, this result is expected, as return and safety are more tightly coupled than in Case Study~2. In Case Study~2, safety corresponds to a fixed hazardous region within the gridworld that the agent must learn to avoid. In contrast, in this setting, vehicles are dynamically moving, which more strongly incentivizes safe behavior and leads to a closer alignment between return and safety.


\section{CONCLUSION}
We introduced federated distributional reinforcement learning (FedDistRL), where clients federate distributional critics while keeping policies local, enabling risk-relevant value sharing under privacy constraints. We identified mean-smearing as a characteristic failure mode of mean-oriented aggregation for distributional critics, which can degrade multimodality and the tail structure that matter for safety. To mitigate this, we proposed TR-FedDistRL, which constructs a client-local risk-aware reference using a CVaR-weighted Wasserstein barycenter over a round buffer and constrains critic updates with a distributional trust region. Across heterogeneous benchmarks, the proposed regularization reduced drift and improved safety metrics relative to mean-oriented baselines while maintaining competitive return. Future work includes extending theoretical guaranties beyond evaluation to control, improving reference construction under nonstationarity, and integrating heterogeneous critic architectures via distillation or distribution-space aggregation.

\subsection{Limitations}
This work has several limitations. Our approach builds on parameter averaging, which is an early and widely used federated learning paradigm that is known to be sensitive to client heterogeneity; while the proposed risk-aware trust-region regularization mitigates aggregation-induced drift, it does not fully address the structural limitations of parameter-space aggregation. In addition, the baselines considered are necessarily limited and do not include more recent federated optimization or personalization methods that may further improve robustness under heterogeneity. On the theory side, our analysis is restricted to the stability properties of the critic-side update and does not provide global convergence guaranties for the full actor-critic dynamics. Finally, the empirical evaluation is limited to a small number of environments and safety metrics, and extending the framework to richer safety constraints or larger-scale deployments remains an open direction.


\section{ACKNOWLEDGMENTS}

This material is based upon work supported by the National Science Foundation under 
Award No. OISE-2420109. Any opinions, findings, conclusions, or recommendations 
expressed in this material are those of the author(s) and do not necessarily reflect the views of the National Science Foundation.


\section{CODE AVAILABILITY}
Our repository is available under the Apache License version 2.0 at \url{https://github.com/djm3622/fedrl}. Configuration files provide hyperparameters utilized for final results.


\bibliographystyle{unsrt}
\bibliography{ref}

\end{document}